    \algrenewcommand\algorithmicindent{1em}
\newcommand{\cI}{\mathcal{I}}
\newcommand{\cJ}{\mathcal{J}}
\newcommand{\cP}{\mathcal{P}}
\newcommand{\I}{\mathbb{I}}
\newcommand{\N}{\mathbb{N}}
\newcommand{\bbR}{\mathbb{R}}
\newcommand{\tht}{\vartheta}
\newcommand{\lrb}[1]{\left(#1\right)}
\newcommand{\brb}[1]{\bigl(#1\bigr)}
\newcommand{\bsb}[1]{\bigl[#1\bigr]}
\newcommand{\lcb}[1]{\left\{#1\right\}}
\newcommand{\bcb}[1]{\bigl\{#1\bigr\}}
\newcommand{\lfl}[1]{\left\lfloor#1\right\rfloor}
\newcommand{\bfl}[1]{\bigl\lfloor#1\bigr\rfloor}
\newcommand{\labs}[1]{\left\lvert#1\right\rvert}
\newcommand{\babs}[1]{\bigl\lvert#1\bigr\rvert}
\DeclareMathOperator*{\argmin}{argmin}
\newcommand{\lip}{Lipschitz}
\newcommand{\s}{\subset}
\newcommand{\iop}{\infty}
\newcommand{\nhphantom}[1]{\sbox0{#1}\hspace{-\the\wd0}}
\newcommand{\mypapertitle}{Regret Analysis of \algName{}\\ (Preliminary work)}
\newcommand{\fB}{\mathfrak{B}}
\newcommand{\cancL}{\text{$\filledmedsquare${\hspace{-1pt}}$\medsquare${\hspace{-1pt}}$\square${\hspace{-1pt}}$\medsquare$}}
\newcommand{\cancR}{\text{$\medsquare${\hspace{-1pt}}$\medsquare${\hspace{-1pt}}$\medsquare${\hspace{-1pt}}$\filledmedsquare$}}
\newcommand{\cancLL}{\text{$\filledmedsquare${\hspace{-1pt}}$\filledmedsquare${\hspace{-1pt}}$\medsquare${\hspace{-1pt}}$\medsquare$}}
\newcommand{\cancRR}{\text{$\medsquare${\hspace{-1pt}}$\medsquare${\hspace{-1pt}}$\filledmedsquare${\hspace{-1pt}}$\filledmedsquare$}}
\newcommand{\cancLR}{\text{$\filledmedsquare${\hspace{-1pt}}$\medsquare${\hspace{-1pt}}$\medsquare${\hspace{-1pt}}$\filledmedsquare$}}
\newcommand{\cancNone}{\text{$\medsquare${\hspace{-1pt}}$\medsquare${\hspace{-1pt}}$\medsquare${\hspace{-1pt}}$\medsquare$}}
\newcommand{\algName}{Dyadic Search}
\newcommand{\fupd}{\mathrm{update}}
\newcommand{\fdel}{\mathrm{delete}}
\newcommand{\unif}{\textcolor{black!50!gray}{\mathfrak{u}}}
\newcommand{\nunif}{\textcolor{black!50!gray}{\not{\mathfrak{u}}}}
\newcommand{\del}{\mathrm{del}}
\newcommand{\ts}{\tau^\#}
\newtheorem{theorem}{Theorem}
\newtheorem{claim}{Claim}
\newtheorem{assumption}{Assumption}%
\title{\mypapertitle{}\thanks{This is a preliminary (and unpolished) version of our regret analysis of \algName{}. Stay tuned for the final polished paper.}}
\author[1]{Fran\c{c}ois Bachoc}
\author[2,3]{Tommaso Cesari}
\author[2,4]{Roberto Colomboni}
\author[2,4]{Andrea Paudice}
\affil[1]{Institut de Math\'ematiques de Toulouse, Toulouse, France}
\affil[2]{Universit\`a degli Studi di Milano, Milano, Italy}
\affil[3]{Toulouse School of Economics, Toulouse, France}
\affil[4]{Istituto Italiano di Tecnologia, Genova, Italy}
\begin{document}

\maketitle

\begin{abstract}
We analyze the cumulative regret of the \algName{} algorithm of \citet{bachoc2022aNearOptimal}.
\end{abstract}

\section{Setting}\label{s:setting}

In this section, we introduce the formal setting for our budget convex optimization problem.

Given a bounded interval $I\s \bbR$, our goal is to minimize an unknown \emph{convex} function $f\colon I \to \bbR$ picked by a possibly adversarial and adaptive environment by only requesting fuzzy evaluations of $f$.
At every interaction $t$, the optimizer is given a certain budget $b_t$ that can be invested in a query point $X_t$ of their choosing to reduce the fuzziness of the value of $f(X_t)$, modeled by an interval $J_t \ni f(X_t)$.

The interactions between the optimizer and the environment are described in Optimization~Protocol~\ref{a:protocol}.

{
\makeatletter
\renewcommand{\ALG@name}{Optimization Protocol}
\makeatother

\begin{algorithm}
\caption{\label{a:protocol}}
\textbf{input:} A non-empty bounded interval $I\s\bbR$ (the domain of the unknown objective $f$)
\begin{algorithmic}[1]
\For{$t=1,2,\dots$}
    \State The environment picks and reveals a budget $b_t > 0$
    \State The optimizer selects a query point $X_t \in I$ where to invest the budget $b_t$
    \State The environment picks and reveals an interval $J_t \s \bbR$ such that $f(X_t) \in J_t$
\EndFor 
\end{algorithmic}
\end{algorithm}
}

We stress that the environment is adaptive. 
Indeed, the intervals $J_t$ that are given as answers to the queries $X_t$ can be chosen by the environment as an arbitrary function of the past history, as long as they represent fuzzy evaluations of the convex function $f$, i.e., $f(X_t) \in J_t$.

Note that optimization would be impossible without further restrictions on the behavior of the environment, since an adversarial environment could return $J_t=\mathbb R$ for all $t\in \N$, making it impossible to gather any meaningful information.
We limit the power of the environment by relating the amount of budget invested in a query point $X_t$ with the length of the corresponding fuzzy representation $J_t$ of $f(X_t)$.
The idea is that the more budget is invested, the more accurate approximation of the objective $f$ can be determined, in a quantifiable way.
This is made formal by the following assumption.
\begin{assumption}
\label{ass:budget}
There exist $c \ge 0$ and $\alpha > 0$ such that, for any $t \in \N$, if the optimizer invested the budgets $b_1, \dots, b_t$ in the query points $X_1, \dots, X_t$, then
\[
    \labs{ J_t }
\le
    \frac{c}{\fB_t^\alpha} \;,
\]
where $\labs{J_t}$ denotes the length of $J_t$ and $\fB_t \coloneqq \sum_{s=1}^t b_s\I\{X_s=X_t\}$ is the total budget invested in $X_t$ up to time $t$.
\end{assumption}

The performance after $T$ interactions of an algorithm that received budgets $b_1,\dots,b_T$ is evaluated with the cumulative regret.
More precisely, we want to control the difference  
\[
    R_T
\coloneqq    
    \sum_{t=1}^T f(X_t) b_t - \inf_{x\in I} \sum_{t=1}^T f(x) b_t
\] 
for any choice of the convex function $f$ and the fuzzy evaluations $J_1,\dots,J_T$.

\section{\algName}\label{s:algo}
In this section, we present our \algName{} algorithm for budget convex optimization (\Cref{a:dyadic}).

Before presenting its pseudo-code, we introduce some notation.
For any positive integer $n\in \N$ we denote by $[n]$ the set $\{1,\dots, n\}$ of the first $n$ integers.
Let $\cP \coloneqq \{ \cancL, \cancR, \cancLL, \cancRR, \cancLR \}$.
The blackened parts of the elements of $\cP$ represent which portions of the active interval maintained by \algName{} the algorithm will delete.
Additionally, we will consider the element $\cancNone$ representing the case where no parts of the active interval will be deleted.
Let $\cJ$ be the set of all intervals, and $\cI \s \cJ$ that of all \emph{bounded} intervals.
Furthermore, for any interval $J\in \cJ$, let
\[
    J^- \coloneqq \inf (J)
    \qquad \text{and} \qquad
    J^+ \coloneqq \sup (J) \;.
\]
\algName{} relies on four auxiliary functions: the $\fdel$ function, the uniform partition function $\unif$, the non-uniform partition function $\nunif$, and the $\fupd$ function.
The $\fdel$ function
\[
    \fdel \, \colon \, \cJ^3 \to \cP \cup \{ \cancNone \}
\]
is defined, for all $(J_l, J_{c}, J_{r}) \in \cJ^3$, by
\[
    \begin{cases}
    \cancLL & \text{if } J_c^- \ge J_r^+\text{, else}\\
    \cancRR & \text{if } J_c^- \ge J_l^+\text{, else}\\
    \cancLR & \text{if } J_l^- \ge \min ( J_c^+, J_r^+ ) \text{ and } J_r^- \ge \min ( J_l^+, J_c^+)\text{, else}\\
    \cancL & \text{if } J_l^- \ge \min ( J_c^+, J_r^+ )\text{, else}\\
    \cancR & \text{if } J_r^- \ge \min ( J_l^+, J_c^+ )\text{, else}\\
    \cancNone  & .
    \end{cases}
\]
In words, the intervals $J_l, J_c, J_r$ will represent the fuzzy evaluations of three points $l < c < r$ in the domain of the unknown objective (left, center, and right).
Since we are assuming that the objective is convex (hence unimodal\footnote{By unimodal, we mean that there exists a point $x$ belonging to the closure of the domain of $f$ such that $f$ is nonincreasing before $x$ and nondecreasing after $x$. More precisely, either $f$ is nonincreasing on the domain intersected with $(-\iop,x]$ and nondecreasing on the domain intersected with $(x,\iop)$ or is nonincreasing on the domain intersected with $(-\iop,x)$ and nondecreasing on the domain intersected with $[x,\iop)$.}), note that whenever an upper bound on the value of the objective at a point $x$ is lower than the lower bound at another point $y$ that is left (resp., right) of $x$, then, all points that are left (resp., right) of $y$ ($y$ included) are no better than $x$.
Therefore, the function $\fdel$ returns which part of an interval containing three distinct points $l < c < r$ should be deleted given the fuzzy evaluations $J_l, J_c, J_r$. (E.g., $\cancLL$ represents the deletion of all points of the active interval left of $c$,
$\cancR$ represents the deletion of all points of the active interval right of $r$,
$\cancNone$ is returned when the fuzzy evaluations are not sufficient to delete anything, etc.)

The uniform and non-uniform partition functions are defined, respectively, by
\begin{align*}
    \unif \, \colon \, \cI & \textstyle{ \to \bbR^3 \;, \quad I \mapsto \brb{ \frac34 I^- + \frac14 I^+, \, \frac12 I^- + \frac12 I^+, \, \frac14 I^- + \frac34 I^+ } } \;, \\
    \nunif \, \colon \, \cI & \textstyle{ \to \bbR^3 \;, \quad I \mapsto \brb{ \frac23 I^- + \frac13 I^+, \, \frac12 I^- + \frac12 I^+, \, \frac13 I^- + \frac23 I^+ } } \;.
\end{align*}
In words, when applied to an interval $I$, the uniform partition function $\unif$ returns the three points that are at $\nicefrac14$,  $\nicefrac12$, and $\nicefrac34$ of the interval, while the non-uniform partition function $\nunif$ returns the three points that are at $\nicefrac13$, $\nicefrac12$, and $\nicefrac23$ of the interval (see \Cref{f:unif-nunif}).
\begin{figure}
    \centering
    \begin{tikzpicture}[scale=3]
    \draw (0,0) -- (1,0);
    \foreach \x in {0,1}
    {
        \draw (\x, -0.05) -- (\x, 0.05);
    }
    \foreach \x in {1/4,1/2,3/4}
    {
        \draw[blue] (\x, -0.05) -- (\x, 0.05);
    }
    \draw 
        (0,-0.05) node[below] {$0$}
        (1/4,-0.05) node[below, blue] {$\nicefrac14$}
        (1/2,-0.05) node[below, blue] {$\nicefrac12$}
        (3/4,-0.05) node[below, blue] {$\nicefrac34$}
        (1,-0.05) node[below] {$1$}
    ;
    \draw (1/2,0.3) node {$\unif$};
    \end{tikzpicture}
    \qquad
    \begin{tikzpicture}[scale=3]
    \draw (0,0) -- (1,0);
    \foreach \x in {0,1}
    {
        \draw (\x, -0.05) -- (\x, 0.05);
    }
    \foreach \x in {1/3,1/2,2/3}
    {
        \draw[blue] (\x, -0.05) -- (\x, 0.05);
    }
    \draw 
        (0,-0.05) node[below] {$0$}
        (1/3,-0.05) node[below, blue] {$\nicefrac13$}
        (1/2,-0.05) node[below, blue] {$\nicefrac12$}
        (2/3,-0.05) node[below, blue] {$\nicefrac23$}
        (1,-0.05) node[below] {$1$}
    ;
    \draw (1/2,0.3) node {$\nunif$};
    \end{tikzpicture}
    \caption{The uniform ($\protect\unif$) and non-uniform ($\protect\nunif$) partition functions applied to the interval $I=[0,1]$.}
    \label{f:unif-nunif}
\end{figure}
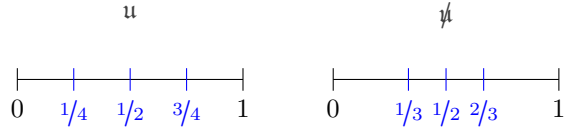

The $\fupd$ function
\[
    \fupd \, \colon \, \cI \times \{ \unif, \nunif \} \times \cP \to \cI \times \{ \unif, \nunif \}
\]
is defined, for all $(I,\tht,\del) \in  \cI \times \{ \unif, \nunif \} \times \cP$, by the following table:
\[
\begin{matrix}
& \unif & \nunif \\
\cancLL & \brb{ \bsb{ \frac12I^-+\frac12I^+, \, I^+ } , \, \unif  } & \brb{ \bsb{ \frac12I^-+\frac12I^+, \, I^+ } , \, \nunif  } \\
\cancRR & \brb{ \bsb{ I^-, \, \frac12I^-+\frac12I^+ } , \, \unif  } & \brb{ \bsb{ I^-, \, \frac12I^-+\frac12I^+ } , \, \nunif  } \\
\cancLR & \brb{ \bsb{ \frac{3I^- + I^+}{4}, \, \frac{I^- + 3I^+}{4} }, \, \unif  } & \brb{ \bsb{ \frac{2I^- + I^+}{3}, \, \frac{I^- + 2I^+ }{3} } , \, \unif  }\\
\cancL & \brb{ \bsb{ \frac34I^-+\frac14I^+, \, I^+ } , \, \nunif  } & \brb{ \bsb{ \frac23I^-+\frac13I^+, \, I^+ } , \, \unif  } \\
\cancR & \brb{ \bsb{ I^-, \, \frac14I^-+\frac34I^+ } , \, \nunif  } & \brb{ \bsb{ I^-, \, \frac13I^-+\frac23I^+ } , \, \unif  } \\
\end{matrix}
\]
In words, when applied to an interval $I$, a type of partition $\tht$, and the subset of $I$ to be deleted modeled by $\del$, the $\fupd$ function returns as the first component the interval $I$ pruned of the subset of $I$ specified by $\tht$ and $\del$, and, as the second component, how the new interval will be partitioned.
It can be seen that the types of partitions returned by $\fupd$ are chosen so that our \algName{} algorithms will only query points on a (rescaled) dyadic mesh. (E.g., if $I=[0,1]$, \algName{} will only query points of the form $k/2^{h}$, for $k,h\in \N$.)

For all $t \in \N$, if the sequence of budgets picked by the environment up to time $t$ is $b_1,\dots,b_t$ and the sequence of query points selected by the optimizer is $X_1,\dots,X_t$, for each $x\in \bbR$, we define the quantities
\[
    \fB_{x,t}
\coloneqq
    \sum_{s=1}^t b_s \I \{X_s = x\}
\qquad
\text{and}
\qquad
    J_{x,t}
\coloneqq
    \bigcap_{s\in[t], X_s = x} J_s
\]
with the understanding that $J_{x,t}=\bbR$ whenever $X_s \neq x$ for all $s \in [t]$.
Furthermore, define $\fB_{x,0} = 0$ for all $x\in \bbR$.
In words, $\fB_{x,t}$ is the total budget that has been invested in $x$ by the optimizer up to and including time $t$, while $J_{x,t}$ is the best fuzzy evaluation of the unknown objective at $x$ that is available at the end of time $t$.

The pseudocode of \algName{} is provided in \Cref{a:dyadic}.

\begin{algorithm}
\caption{\label{a:dyadic}\algName}
\textbf{input:} A non-empty bounded interval $I\s\bbR$ (the domain of the unknown objective)

\textbf{initialization:} $I_1 \coloneqq [I^-,I^+]$, $\tht_1 \coloneqq \unif$, $(l_1,c_1,r_1) \coloneqq \tht_1(I_1)$, $t_0 \coloneqq 0$ [and $B_0 \coloneqq 0$, $B_{1,0} \coloneqq 0$]
\begin{algorithmic}[1]
\For{epochs $\tau=1,2,\dots$}
    \For{$t=t_{\tau-1}+1,t_{\tau-1}+2,\dots$}
        \State Query $X_t \in \argmin_{x \in \{ l_\tau, c_\tau, r_\tau \} } \fB_{x,t-1}$ \label{s:query}
        \State Let $\del_t \coloneqq \fdel( J_{l_\tau,t}, J_{c_\tau,t}, J_{r_\tau,t} )$
        \State [Let $B_{\tau,t} \coloneqq B_{\tau,t-1}+b_t$ and $\tau_t \coloneqq \tau$] \label{state:extra-command}
        \If{$\del_t \neq \cancNone$\label{state:recommendation-one}}
            \State [Let $t_\tau \coloneqq t$, $B_\tau \coloneqq B_{\tau,t}$, and $B_{\tau+1,t} \coloneqq 0$] \label{state:extra-command-two}
            \State Let $(I_{\tau+1},\tht_{\tau+1}) \coloneqq \fupd (I_\tau,\tht_\tau,\del_t)$ \label{state:update}
            \State Let $(l_{\tau+1},c_{\tau+1},r_{\tau+1}) \coloneqq \tht_{\tau+1}(I_{\tau+1})$ \label{s:query-points}
            \State \textbf{break}
        \EndIf
    \EndFor
    \label{s:repeat-end}
\EndFor 
\end{algorithmic}
\end{algorithm}

We note that the assignments in brackets in the initialization and \Cref{state:extra-command,state:extra-command-two} are not needed to run the algorithm. 
We only added them for notational convenience of the analysis.

As noted above, by definition of the $\fupd$ function, \algName{} only queries points in the rescaled dyadic mesh $\bcb{ I^- + k \cdot 2^{-h} \cdot \labs I : h \in \N, k \in [2^h-1] }$.
Moreover, we stress that \algName{} is any-time (it does not need to know the time horizon $T$ \emph{a priori}), any-budget (it does not need to know the total budget $B \coloneqq \sum_{t=1}^T b_t$) and does not require the unknown objective to be 
\lip{}.

\section{Cumulative Regret Analysis}
\label{s:upper}

\begin{theorem}
\label{t:upper-bound-regret}
For any compact interval $I\s\bbR$, if the optimizer is running \algName{} (\Cref{a:dyadic}) with input $I$ in an environment satisfying Assumption \ref{ass:budget} for some $c\ge 0$ and $\alpha>0$, then, there exist $c_1, c_2 >0$ such that, for any time $T\in \N$ and every convex continuous function $f\colon I \to \bbR$, if budgets $b_t$ are equal to $1$ for all $t\in \N$, the regret $R_T$ satisfies
\begin{equation}
\label{e:upper-bound}
    R_T
\le
    c_1 \cdot T^{1-\alpha} \brb{ c \ln (MT) + 1}
    +
    c_2 \cdot M \;,
\end{equation}
where $M \coloneqq \max(f) - \min(f)$.
\end{theorem}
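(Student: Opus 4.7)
My plan is an epoch-by-epoch analysis. The first ingredient, proven by induction on $\tau$, is that $I_\tau$ always contains a point achieving $\min_I f$. For the inductive step I go through each of the five values of $\del_t\in\cP$: the defining inequality of that branch of $\fdel$, together with $f(p)\in J_p$ for $p\in\{l_\tau,c_\tau,r_\tau\}$ and the unimodality of the convex $f$, implies $f(x)\ge\min_I f$ on the region being deleted. For instance, $\del_t=\cancL$ forces $f(l_\tau)\ge\min(f(c_\tau),f(r_\tau))$, and unimodality then gives $f(x)\ge f(l_\tau)\ge\min_I f$ for every $x\le l_\tau$ in $I_\tau$; the other four cases are analogous.

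The second ingredient is a bound on epoch length and per-round regret. A direct inspection of $\fdel$ shows that $\del_t=\cancNone$ forces $|f(p)-f(q)|\le 2\varepsilon$ for all pairs $p,q\in\{l_\tau,c_\tau,r_\tau\}$, where $\varepsilon$ upper-bounds the current lengths of the three fuzzy evaluations. Line~\ref{s:query} of \Cref{a:dyadic} always queries the least-visited of the three, so after $N$ rounds in the epoch each has been queried at least $\lfloor N/3\rfloor$ times; \Cref{ass:budget} with unit budgets then yields $\varepsilon\le c/\lfloor N/3\rfloor^\alpha$. Setting $W_\tau:=\max_p f(p)-\min_p f(p)$, the epoch must end within $N_\tau=O\bigl((c/W_\tau)^{1/\alpha}\bigr)$ rounds. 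For per-round regret, I split
\[
f(X_t)-\min_I f \;=\; \bigl(f(X_t)-m_\tau\bigr)+\bigl(m_\tau-\min_I f\bigr),\qquad m_\tau:=\min_p f(p).
\]
The first summand is $\le W_\tau$; for the second, a three-slope convexity argument, using that $l_\tau,c_\tau,r_\tau$ sit at fixed fractions of $I_\tau$ (at $1/4,1/2,3/4$ under $\unif$ or $1/3,1/2,2/3$ under $\nunif$) together with the minimizer invariant, yields $m_\tau-\min_I f\le C\,W_\tau$ for an absolute constant $C$. Hence the per-epoch regret is $R_\tau=O(W_\tau\,N_\tau)=O(c\,N_\tau^{1-\alpha})$ whenever the epoch is long enough to trigger the $\varepsilon$-bound.

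To conclude, the $\fupd$ table gives $|I_{\tau+1}|\le(3/4)|I_\tau|$, so only $K=O(\ln(MT))$ epochs need to be analyzed, and Hölder's inequality bounds $\sum_\tau N_\tau^{1-\alpha}\le K^\alpha T^{1-\alpha}=O(\ln(MT)\cdot T^{1-\alpha})$, giving the leading $O(c\,T^{1-\alpha}\ln(MT))$ term. Lower-order contributions from short initial epochs (where the $\varepsilon$-bound is vacuous) and from the unfinished current epoch produce the $O(T^{1-\alpha})$ and $O(M)$ residuals. The main obstacle is the convexity step $m_\tau-\min_I f\le O(W_\tau)$: since $f$ is only assumed convex and continuous (not Lipschitz), the full-interval variation $V_\tau:=\max_{I_\tau}f-\min_I f$ can stay large while $W_\tau\to 0$, so a bound routed through $V_\tau$ fails. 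The key observation is that the regret only sees $f$ at the queried points, and the specific placement of those three points inside $I_\tau$, combined with the three-slope inequality for convex functions, controls the gap $m_\tau-\min_I f$ purely in terms of $W_\tau$.
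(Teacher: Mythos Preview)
Your per-epoch ingredients are sound: the minimizer-containment invariant, the three-slope bound $m_\tau-\min_I f\le C\,W_\tau$, and the epoch-length estimate $N_\tau=O\bigl((c/W_\tau)^{1/\alpha}\bigr)$ (hence $R_\tau=O(cN_\tau^{1-\alpha})$ for long epochs) are all correct and match the paper's Claim~1 in spirit. The gap is in the sentence ``the $\fupd$ table gives $|I_{\tau+1}|\le(3/4)|I_\tau|$, so only $K=O(\ln(MT))$ epochs need to be analyzed.'' Geometric shrinkage of $|I_\tau|$ does \emph{not} bound the number of epochs, nor does it bound the tail regret. For a concrete obstruction, take an environment that returns exact values (\Cref{ass:budget} with $c=0$): then every epoch ends after at most three rounds, there are $\Theta(T)$ epochs, and your H\"older step yields $\sum_\tau N_\tau^{1-\alpha}\le K^\alpha T^{1-\alpha}$ with $K=\Theta(T)$, which is vacuous. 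Likewise, short epochs are not an ``initial'' phenomenon---they can persist throughout the run---so summing their $O(W_\tau)$ contributions to $O(M)$ still requires control on $\sum_\tau W_\tau$ that you have not established. You explicitly (and correctly) note that small $|I_\tau|$ says nothing about $f$ without Lipschitzness; that very observation undercuts the inference you draw from interval shrinkage.

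What is missing is geometric decay of the \emph{per-round regret} $\max_{p\in\{l_\tau,c_\tau,r_\tau\}}f(p)-\min_I f$ across epochs, not just of $|I_\tau|$. The paper obtains this in its Claims~2 and~3 by applying the three-slope inequality \emph{across consecutive epochs}: e.g.\ when $x^\star$ lies to one side of the triple, it compares $f(l_\tau)-f(x^\star)$ to $f(l_{\tau-1})-f(x^\star)$ via the ratio $(l_\tau-x^\star)/(l_{\tau-1}-x^\star)$, which is bounded away from~$1$ by the fixed fractional placement of the query points. Once that decay is in hand, after $\tau^\#=O(\ln(MT^\alpha))$ epochs the per-round regret is at most $T^{-\alpha}$ and the remaining $\le T$ rounds contribute $O(T^{1-\alpha})$; the short-epoch contributions now sum as a geometric series to $O(M)$. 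Your within-epoch three-slope lemma is exactly the right tool---you just need to apply it a second time, relating the new query points to the old ones, to close the argument.
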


\begin{proof}
Fix a compact interval $I$, a time horizon $T$, and a convex continuous function $f\colon I \to \bbR$.
Up to translating and rescaling, we can (and do!) assume without loss of generality that $\min(f)=0$ and $I = [0,1]$.
We also assume that $f$ admits a unique minimizer $x^\star \in (0,1)$ (the other cases are simpler).
Redefine $t_{\tau_T} \coloneqq T$ and $B_{\tau_T} \coloneqq B_{\tau_T,T}$.
\begin{claim}
\label{c:claim-1}
For $\tau \in [\tau_T]$, if $B_{\tau} \ge 4$ (i.e., if epoch $\tau$ lasts at least $4$ rounds), then
\[
    \max_{x\in \{l_{\tau}, c_{\tau}, r_{\tau} \}} f(x)
\le
    \frac{4 c 3^\alpha}{ (B_{\tau} -3)^\alpha }
\]
\end{claim}
\begin{proof}[Proof of \Cref{c:claim-1}]
Fix any epoch $\tau\in [\tau_T]$ and assume that $B_{\tau}\ge 4$.
Remember that, by \citet[Eq.\ (9)]{bachoc2022aNearOptimal}, we have
\[
    \min_{x\in\{l_{\tau},c_{\tau},r_{\tau}\}} \sum_{s=1}^{t_\tau-1} \I\{X_s = x\}
\ge
    \frac{B_\tau - 3}3 \;.
\]
Assume that $x^\star > r_\tau$ (all other cases can be treated similarly), which in turn implies that $\max_{x\in\{l_{\tau},c_{\tau},r_{\tau}\}} f(x) = f(l_\tau)$.
Then, recalling that at time $t_\tau - 1$, it holds that $J_{l_\tau, t_\tau-1} \cap J_{c_\tau, t_\tau-1} \cap J_{r_\tau, t_\tau-1} \neq \varnothing$ (implying in particular that $J^+_{r_\tau, t_\tau-1} - J^-_{l_\tau, t_\tau-1} \ge 0$), we get
\begin{align*}
    \max_{x\in\{l_{\tau},c_{\tau},r_{\tau}\}} f(x)
&
=
    f(l_\tau)
=
    f(l_\tau) - f(x^\star)
=
    f(l_\tau) - f(r_\tau) + \frac{f(r_\tau) - f(x^\star)}{r_\tau - x^\star}(r_\tau - x^\star)
\\
&
\le
    f(l_\tau) - f(r_\tau) + \frac{f(l_\tau) - f(r_\tau)}{l_\tau - r_\tau}(r_\tau - x^\star)
=
    \frac{x^\star - l_\tau}{r_\tau-l_\tau}\brb{ f(l_\tau) - f(r_\tau) }
\le
    2 \brb{ f(l_\tau) - f(r_\tau) }
\\
&
\le
    2 \brb{ J^+_{l_\tau, t_\tau-1} - J^-_{r_\tau, t_\tau-1} }
\le
    2 \brb{ J^+_{l_\tau, t_\tau-1} - J^-_{l_\tau, t_\tau-1} + J^+_{r_\tau, t_\tau-1} - J^-_{r_\tau, t_\tau-1} }
\\
&
\le
    4 \max \bcb{ \labs{ J_{l_\tau, t_\tau-1}  }, \, \labs{ J_{c_\tau, t_\tau-1} }, \, \labs{ J_{r_\tau, t_\tau-1} } }
\\
&
\le
    4 \max \lcb{ \frac{c}{\lrb{ \sum_{s=1}^{t_\tau -1} \I \{X_s = l_\tau\} }^\alpha }, \, \frac{c}{\lrb{ \sum_{s=1}^{t_\tau -1} \I \{X_s = c_\tau\} }^\alpha}, \, \frac{c}{\lrb{ \sum_{s=1}^{t_\tau -1} \I \{X_s = r_\tau\} }^\alpha} }
\\
&
=
     \frac{4c}{ \lrb{ \min_{x\in\{l_{\tau},c_{\tau},r_{\tau}\}} \sum_{s=1}^{t_\tau-1} \I\{X_t = x\} }^\alpha }
\le
    \frac{4c}{\lrb{\frac{B_\tau -3}3}^\alpha}
=
    \frac{4 c 3^\alpha}{ (B_{\tau} -3)^\alpha } \;.
\end{align*}

\end{proof}
Let $\tau^\star \in [\tau_T]$ be the first epoch from which $x^\star \in [l_\tau, r_\tau]$.

\begin{claim}
\label{c:claim-2}
If $\tau^* \ge 2$, then, for each $\tau \in \{2,\dots, \tau^\star-1\}$,
\[
    \max_{x\in \{l_{\tau}, c_{\tau}, r_{\tau} \}} f(x)
\le
    \frac34 \lrb{ \max_{x\in \{l_{\tau-1}, c_{\tau-1}, r_{\tau-1} \}} f(x) }
\]
\end{claim}
\begin{proof}[Proof of \Cref{c:claim-2}]
Assume that $\tau^* \ge 2$. Then, either for all $\tau \in [\tau^\star-1]$, it holds that $r_\tau < x^\star$, or for all $\tau \in [\tau^\star-1]$, we have $l_\tau > x^\star$. 
In the first case, for all $\tau \in \{2,\dots, \tau^\star-1\}$,
\begin{align*}
    \max_{x\in \{l_{\tau}, c_{\tau}, r_{\tau} \}} f(x)
&
=
    f(l_\tau) - f(x^\star)
=
    \frac{ f(l_\tau) - f(x^\star) }{ l_\tau - x^\star } (l_\tau - x^\star)
\le
    \frac{ f(l_{\tau-1}) - f(x^\star) }{ l_{\tau-1} - x^\star } (l_\tau - x^\star)
\\
&
=
    \frac34 \brb{ f(l_{\tau-1}) - f(x^\star) }
=
    \frac34 f(l_{\tau-1})
=
    \frac34 \max_{x\in \{l_{\tau-1}, c_{\tau-1}, r_{\tau-1} \}} f(x) \;.
\end{align*}
The other case can be worked out similarly.
\end{proof}
For each $m \in \N$, let $A_m \coloneqq \bcb{ x\in (0,1) : \exists k \in [2^m-1], x = k/2^m }$ be the dyadic mesh in $(0,1)$ of index $m$.
For any epoch $\tau \in \N$, let $m_\tau \coloneqq - \log_2 ( c_\tau - l_\tau )$ be the index of the dyadic mesh in $(0,1)$ at epoch $\tau$ of \algName{} (note that $m_\tau \ge 2$ for all $\tau \in \N$ because \algName{} begins with a step-size of $1/4$).

Note that:
\begin{itemize}
    \item If the epoch $\tau^\star$ is non-uniform, then, then previous epoch has to be non-uniform as well and as soon as we change the dyadic mesh (in at most two epochs) we have 4 dyadic points in $(0,1)$ to both sides of $x^\star$.
    \item If the epoch $\tau^\star$ is uniform, then, then previous epoch can be either uniform or non-uniform.
        \begin{itemize}
            \item If the previous epoch is non-uniform, then as soon as we change the dyadic mesh twice (in at most three epochs) we have 4 dyadic points in $(0,1)$ to both sides of $x^\star$.
            \item If the previous epoch is uniform, then as soon as we change the dyadic mesh twice (in at most three epochs) we have 4 dyadic points in $(0,1)$ to both sides of $x^\star$.
        \end{itemize}
\end{itemize}
Let $m^\star \coloneqq \min \bcb{m \in \N : \babs{ A_m \cap (0,x^\star] } \ge 4 \text{ and } \babs{ A_m \cap [x^\star, 1) } \ge 4 }$ be the smallest index of the dyadic mesh in $(0,1)$ such that there are at least 4 points of the dyadic mesh in $(0,1)$ to the right and to the left of $x^\star$.
For each $m \ge m^\star$ let $x_1^m < x_2^m < x_3^m < x_4^m \le x^\star$ be the four points of $A_m \cap (0,x^\star]$ closest to $x^\star$ and $x^\star \le x_5^m < x_6^m < x_7^m < x_8^m$ be the four points of $A_m \cap [x^\star, 1)$ closest to $x^\star$.
The crucial observation is that, for all epochs $\tau \ge \tau^\star + 3$, we have that $l_\tau,c_\tau,r_\tau \in \{x_1^{m_\tau}, \dots, x_8^{m_\tau}\}$.

\begin{claim}
\label{c:claim-3}
For each $m \ge m^\star+1$, we have
\[
    \max_{x \in \{x_1^{m}, \dots, x_8^{m}\}} f(x)
\le
    \frac47 \lrb{ \max_{x \in \{x_1^{m-1}, \dots, x_8^{m-1}\}} f(x) } \;.
\]
\end{claim}
\begin{proof}[Proof of \Cref{c:claim-3}]
Assume that $m \ge m^* +1$. 
Then, either $\max_{ x \in \{x_1^{m}, \dots, x_8^{m}\} } f(x) = f(x_1^m)$ or $\max_{ x \in \{x_1^{m}, \dots, x_8^{m}\} } f(x) = f(x_8^m)$. 
In the first case, we have
\begin{align*}
    \max_{ x \in \{x_1^{m}, \dots, x_8^{m}\} } f(x)
&
=
    f(x_1^{m}) - f(x^\star)
=
    \frac{ f(x_1^{m}) - f(x^\star) }{ x_1^{m} - x^\star } (x_1^{m} - x^\star)
\le
    \frac{ f(x_1^{m-1}) - f(x^\star) }{ x_1^{m-1} - x^\star } (x_1^{m} - x^\star)
\\
&
=
    \frac47 \brb{ f(x_1^{m-1}) - f(x^\star) }
=
    \frac47 f(x_1^{m-1})
\le
    \frac47 \max_{x \in \{x_1^{m-1}, \dots, x_8^{m-1}\}} f(x) \;.
\end{align*}
The other case can be worked out similarly.
\end{proof}
Define $\ts \coloneqq \bfl{ 4 + 2 \log_{\nicefrac43}(MT^\alpha) }$ so that
\[
    M \lrb{\frac34}^{ \lfl{ \frac{\ts -1}{2} } }
=
    M \lrb{\frac34}^{ \lfl{ \frac{\bfl{ 4 + 2 \log_{\nicefrac43}(MT^\alpha) } -1}{2} } }
\le
    M \lrb{\frac34} ^ { \log_{\nicefrac43}(MT^\alpha) }
=
    M \frac{1}{MT^\alpha}
=
    \frac{1}{T^\alpha} \;.
\]
Assume that $\ts < \tau^\star$ and $\tau^\star + 2 + \ts < \tau_T$ (the other cases can be treated analogously, omitting terms which are not there anymore). 
Then:
\begin{multline*}
    \sum_{t=1}^T f( X_t )
=
    \sum_{\tau=1}^{\ts} \sum_{t = t_{\tau-1} +1}^{t_\tau} f( X_t )
    +
    \sum_{\tau = \ts+1}^{t^\star -1} \sum_{t = t_{\tau-1} +1}^{t_\tau} f( X_t )
\\
    +
    \sum_{\tau = \tau^\star}^{\tau^\star+2} \sum_{t = t_{\tau-1} +1}^{t_\tau} f( X_t )
    +
    \sum_{\tau = \tau^\star+3}^{\tau^\star+2+\ts} \sum_{t = t_{\tau-1} +1}^{t_\tau} f( X_t )
    +
    \sum_{\tau = \tau^\star+3+\ts}^{\tau_T} \sum_{t = t_{\tau-1} +1}^{t_\tau} f( X_t )
\end{multline*}
We analyze these five terms individually.
For the first one, we further split the sum into two terms, depending on whether or not $B_\tau \ge 6$. 
By \Cref{c:claim-1}, we have that
\begin{align*}
    \sum_{\substack{\tau=1\\B_\tau \ge 6}}^{\ts} \sum_{t = t_{\tau-1} +1}^{t_\tau} f( X_t )
&
\le
    \sum_{\substack{\tau=1\\B_\tau \ge 6}}^{\ts} \sum_{t = t_{\tau-1} +1}^{t_\tau} \frac{4c3^\alpha}{(B_{\tau_t}-3)^\alpha}
\le
    \sum_{\substack{\tau=1\\B_\tau \ge 6}}^{\ts} \sum_{t = t_{\tau-1} +1}^{t_\tau} \frac{4c3^\alpha}{(B_{\tau_t}-B_{\tau_t}/2)^\alpha}
=
    \sum_{\substack{\tau=1\\B_\tau \ge 6}}^{\ts} \sum_{t = t_{\tau-1} +1}^{t_\tau} \frac{4c6^\alpha}{B_{\tau_t}^\alpha}
\\
&
=
    \sum_{\substack{\tau=1\\B_\tau \ge 6}}^{\ts} 4c6^\alpha B_{\tau}^{1-\alpha}
\le
    \ts \cdot 4c6^\alpha T^{1-\alpha}
\end{align*}
By \Cref{c:claim-2}, we have that
\[
    \sum_{\substack{\tau=1\\B_\tau \le 5}}^{\ts} \sum_{t = t_{\tau-1} +1}^{t_\tau} f( X_t )
\le
    5 M \sum_{\tau=0}^{\iop} \brb{ \nicefrac34 }^\tau
=
    20 M
\]
Thus, the first term is upper bounded by $\ts \cdot 4c6^\alpha T^{1-\alpha} + 20 M$. 

For the second term, we leverage \Cref{c:claim-2} and the definition of $\ts$ to obtain
\begin{align*}
    \sum_{\tau = \ts+1}^{t^\star -1} \sum_{t = t_{\tau-1} +1}^{t_\tau} f( X_t )
&
\le
    M \sum_{\tau = \ts+1}^{t^\star -1} \sum_{t = t_{\tau-1} +1}^{t_\tau} \brb{ \nicefrac34 }^{\tau-1}
\le
    M \brb{ \nicefrac34 }^{\ts-1} \sum_{\tau = \ts+1}^{t^\star -1} \sum_{t = t_{\tau-1} +1}^{t_\tau} 1
\le
    M \brb{ \nicefrac34 }^{\bfl{\frac{\ts-1}{2}}} \sum_{\tau = \ts+1}^{t^\star -1} \sum_{t = t_{\tau-1} +1}^{t_\tau} 1
\\
&
\le
    T^{1-\alpha}
\end{align*}
For the third term, we further split the sum into two terms, depending on whether or not $B_\tau \ge 6$.
Proceeding exactly as for the first term, we obtain
\[
    \sum_{\tau = \tau^\star}^{\tau^\star+2} \sum_{t = t_{\tau-1} +1}^{t_\tau} f( X_t )
\le
    3 \cdot 4c6^\alpha T^{1-\alpha} + 15 M
\]
For the fourth term, we split again the sum into two terms, depending on whether or not $B_\tau \ge 6$.
If $B_\tau \ge 6$, proceeding exactly as for the corresponding part of the first term, we obtain
\[
    \sum_{\substack{\tau=\tau^\star+3\\B_\tau \ge 6}}^{\tau^\star+2+\ts} \sum_{t = t_{\tau-1} +1}^{t_\tau} f ( X_t )
\le
    \ts \cdot 4c6^\alpha T^{1-\alpha}
\]
Instead, if $B_\tau \le 5$, by \Cref{c:claim-3}, we get
\[
    \sum_{\substack{\tau=\tau^\star+3\\B_\tau \le 5}}^{\tau^\star+2+\ts} \sum_{t = t_{\tau-1} +1}^{t_\tau} f ( X_t )
\le
    5 \sum_{\substack{\tau=\tau^\star+3\\B_\tau \le 5}}^{\tau^\star+2+\ts} \max_{x\in\{ l_\tau, c_\tau, r_\tau \}} f ( x )
\le
    5 \sum_{\substack{\tau=\tau^\star+3\\B_\tau \le 5}}^{\tau^\star+2+\ts} \max_{x \in \lcb{ x^{m_\tau}_1, \dots, x^{m_\tau}_8 }} f ( x )
\le
    10 M \sum_{\tau= 0}^{\iop} \brb{\nicefrac47}^\tau
\le
    \frac{70}3 M \;.
\]
For the last term, by \Cref{c:claim-3}, we get
\begin{align*}
    \sum_{\tau=\tau^\star+3+\ts}^{\tau_T} \sum_{t = t_{\tau-1} +1}^{t_\tau} f( X_t )
&
\le
    \sum_{\tau=\tau^\star+3+\ts}^{\tau_T} \sum_{t = t_{\tau-1} +1}^{t_\tau} \max_{x \in \lcb{ x^{m_\tau}_1, \dots, x^{m_\tau}_8 }} f(x)
\le
    \sum_{\tau=\tau^\star+3+\ts}^{\tau_T} \sum_{t = t_{\tau-1} +1}^{t_\tau} M \brb{ \nicefrac47 }^{\lfl{ \frac{\tau-(\tau^\star+3) -1}{2} }}
\\
&
\le
    M \brb{ \nicefrac34 }^{\lfl{ \frac{\ts-1}{2} }} \sum_{\tau=\tau^\star+3+\ts}^{\tau_T} \sum_{t = t_{\tau-1} +1}^{t_\tau} 1
\le
    T^{1-\alpha} \;.
\end{align*}
Putting everything together, we conclude that
\begin{align*}
    R_T
&
\le
    \brb{
    \ts \cdot 4c6^\alpha T^{1-\alpha}
    +
    20 M
    }
    +
    T^{1-\alpha}
    +
    \brb{
    \ts \cdot 4c6^\alpha T^{1-\alpha}
    +
    15 M
    }
    +
    \frac{70}3 M
    +
    T^{1-\alpha}
\\
&
\le
    \brb{ \bfl{ 4 + 2 \log_{\nicefrac43}(MT^\alpha) } \cdot 8c6^\alpha +2 }T^{1-\alpha}
    +
    60 M \;.
\end{align*}
\end{proof}

\bibliographystyle{plainnat}
\bibliography{biblio}

\section*{Acknowledgments}
Tommaso Cesari gratefully acknowledges the support of IBM.

\appendix

\end{document}